\numberwithin{equation}{section} % in amsmath
\newtheorem{theorem}{\sc Theorem}
\newtheorem{lemma}{\sc Lemma}
\newtheorem{coro}{\sc Corollary}
\newtheorem{req}{\sc Requirement}
\newtheorem{nota}{\sc Notation}
\newtheorem{defin}{\sc Definition}
\newtheorem{rem}{\sc Remark}
\newtheorem{cla}{\sc Claim}
\newtheorem{ex}{\sc Example}
\newenvironment{proof}{\par \sc Proof.\rm}{\hspace*{\fill}$\bullet$\vspace{1ex}}
\newenvironment{example}{\begin{ex}}{\hspace*{\fill}$\diamondsuit$\end{ex}}
\newenvironment{claim}{\begin{cla}}{\end{cla}}
\newenvironment{definition}{\begin{defin}}{\end{defin}}
\newenvironment{remark}{\begin{rem}}{\hspace*{\fill}$\Diamond$\end{rem}}
\renewcommand{\emptyset}{\varnothing}
\begin{document}

\title{Algorithmic Identification of Probabilities}
\author{Paul M.B. Vit\'anyi and Nick Chater
\thanks{Vit\'anyi is with the 
National Research Institute for Mathematics
and Computer Science in the Netherlands (CWI) and
the University of Amsterdam.
Address: CWI, Science Park 123, 
1098 XG, Amsterdam, The Netherlands.
Email: {\tt paulv@cwi.nl}. 

Chater is with the 
Behavioural Science Group.
Address: Warwick Business School, University of Warwick, Coventry, CV4 7AL, UK.
Email: {\tt Nick.Chater@wbs.ac.uk}. Chater was supported by ERC Advanced Grant ``Cognitive and Social Foundations of Rationality.''
}}

\maketitle
\begin{abstract}
The problem is to identify a probability 
associated with a set of natural numbers, 
given an infinite data sequence
of elements from the set. If the given sequence is drawn 
i.i.d. and the probability mass function involved 
(the target) belongs to a computably enumerable (c.e.) or
co-computably enumerable (co-c.e.) set of 
computable probability mass
functions, then there is an algorithm
to almost surely identify the target in the limit.
The technical tool is the strong law of large numbers. 
If the set is finite and the elements of the sequence are dependent 
while the sequence is typical in the sense of Martin-L\"of for at least one
measure belonging to a c.e. or co-c.e. set of computable measures,
then there is an algorithm to identify in the limit
a computable measure for which the sequence is typical (there 
may be more than one such measure). 
The technical tool is the theory of Kolmogorov complexity.
We give the algorithms  and consider the associated predictions.
\end{abstract}
\date{}
\section{Introduction}\label{sect.0}
One can associate the natural numbers 
with a lexicographic length-increasing ordering of finite strings
over a finite alphabet. A natural number corresponds to the string
of which it is the position in this order. Since
a language is a set of sentences (finite strings over a finite alphabet),
it can be viewed as the set of natural numbers.
The learnability of a language under various
computational assumptions is the subject of an immensely influential
approach in \cite{Go65} and especially \cite{Go67}, or the review 
\cite{JORS99}. But surely
in the real world the chance of one sentence of a language 
being used is different
from another one. For example, in general short sentences have a larger
chance of turning up than very long sentences. Thus, the elements
of a given language are distributed in a certain way. There arises
the problem of identifying or approximating this distribution.    

Our model is formulated as follows: 
we are given an infinite sequence of data consisting of elements
drawn from the set (language) according to a certain
probability, and the learner has to identify this probability. 
In general, however much data been encountered, 
there is no point at which the learner can
announce a particular probability as correct with certainty.
Weakening the learning model, the learner might learn to
identify the correct probability in the limit. That is, perhaps the learner
might make a sequence of guesses, finally locking 
on to correct probability and sticking to it
forever---even though the learner can never know for sure that it has
identified the correct probability successfully. 
We shall consider
identification in the limit (following, for example, 
\cite{Go67,JORS99,Pi79}). 
Since this is not enough we additionally restrict
the type of probability.

In conventional statistics, probabilistic models are typically idealized
as having continuous valued parameters; and hence there is an uncountable
number of possible probabilities.
In general it is impossible that a learner 
can make a sequence of guesses
that precisely locks on to the correct values of continuous parameters.
In the realm of algorithmic information theory, in particular in
Solomonoff induction \cite{So64} and here, we reason as follows.
The possible strategies of learners are computable in the sense
of Turing \cite{Tu36}, that is, they are computable functions.
The set of these is discrete and thus countable.
The hypotheses that can be learned are therefore countable, 
and in particular
the set of probabilites from which the learner chooses 
must be {\em computable}.

We consider two cases. In case 1 the data are
drawn independent identically distributed (i.i.d.) from a 
set of natural numbers according to a probability mass function
in a co-c.e. set of computable probability mass functions.
In case 2 the set is finite and the elements of the infinite sequence 
are dependent 
and the data sequence is typical for a measure
from a co-c.e. subset of computable measures.

\subsection{Preliminaries}
Let ${\cal N}$ denote the natural numbers, 
and ${\cal R}$ the real numbers.
%Let $L \subseteq {\cal N}$ and $L^*$ be the
%set of finite sequences of elements from $L$.
We say that we {\em identify} a function
$f$ {\em in the limit} if we have an algorithm which
produces an infinite sequence $f_1, f_2, \ldots$ of functions and 
$f_i=f$ for all but finitely many $i$. This corresponds to
the notion of ``identification in the limit'' in 
\cite{Go67,JORS99,Pi79,ZZ08}.
In this notion at every step
an object is produced and after a 
finite number of steps the target object is
produced at every step. However, we do not know this finite number. It is as
if you ask directions and the answer is ``at the last intersection 
turn right,'' but you do not know which intersection is last.
In the sequel we often ``dovetail''
a computation. This is a technique that interleaves the steps of different
computations ensuring progress of each individual computation. For example,
we have computations $c_1, c_2$. Dovetailing them means 
first performing step 1 of $c_1$, then performing step 2 of $c_1$ followed
by step 1 of $c_2$, then performing step 3 of $c_1$ followed
by step 2 of $c_2$, and so on. 

\subsection{Related work}\label{sect.rel}
In \cite{An88} (citing previous more restricted work)
a target probability mass function was identified in the limit
when the data are drawn i.i.d. in the following setting.
Let the target probability mass function $p$ be an element
of a list  $q_1, q_2, \ldots $ subject to the following
conditions: 
(i) every $q_i : {\cal N} \rightarrow {\cal R}$
is a probability mass function; 
(ii) we exhibit a computable total function $C(i,x,\epsilon)=r$ such that
$q_i(x)-r \leq \epsilon$ with $r,\epsilon >0$ are rational numbers.
That is, there exists a rational number approximation for all probability
mass functions in the list up to arbitrary precision, and we give
a single algorithm which for each such
function exhibits such an approximation. 
The technical means used
are the law of the iterated logarithm and the Kolmogorov-Smirnov test. 
However, the list $q_1, q_2, \ldots $ can not contain 
all computable probability
mass functions because of a diagonal argument, Lemma~\ref{lem.incomp}.

In \cite{BC91} computability questions are apparently ignored.
The {\em Conclusion} 
states ``If the true density [and hence
a probability mass function] is finitely complex [it is computable]
then it is exactly discovered for all sufficiently large sample sizes.''.
The tool that is used is estimation according to 
$\min_q (L(q)+\log(1/\prod_{i=1}^n q(X_i))$. Here $q$ is a probability mass 
function, $L(q)$ is the length of its code and $q(X_i)$ is
the $q$-probability of the $i$th random variable $X_i$. To be able to
minimize over the set of computable $q$'s,
one has to know the $L(q)$'s. 
If the set of candidate distributions is countably infinite, then
we can never know when the minimum is reached---hence at best we have
then identification in the limit.
If $L(q)$ is identified with the Kolmogorov complexity $K(q)$, 
as in Section IV of this reference,
then it is incomputable as already
observed by Kolmogorov in \cite{Ko65} 
(for the plain Kolmogorov complexity; the case of
the prefix Kolmogorov complexity $K(q)$ is the same). 
Computable $L(q)$
(given $q$) cannot be computably enumerated; if they were this
would constitute a computable enumeration of computable $q$'s which 
is impossible by Lemma~\ref{lem.incomp}. To obtain the minimum we
require a computable enumeration of the $L(q)$'s in the estimation formula.
The results hold (contrary to what is claimed in the {\em Conclusion}
of \cite{BC91} and
other parts of the text) not for the set of computable 
probability mass functions since they are not c.e..
The sentence
``you know but you don't know you know'' on the second page of \cite{BC91}
does not hold for an arbitrary
computable mass probability.

In reaction to an earlier version of this paper with too
large claims, in \cite{BMS14} it is shown that it is 
impossible to identify a
computable measure in the limit given an infinite 
sequence of elements from its support which sequence is guarantied
to be typical for some computable measure. 

\subsection{Results}
The set of halting algorithms for computable probabilities (or measures) is 
not c.e., 
Lemma~\ref{lem.incomp} in Appendix~\ref{sect.comput}. 
This complicates the algorithms and
analysis of the results. 
In Section~\ref{sect.1} there is
a computable probability mass function (the target) 
on a set of natural numbers.
We are given an infinite sequence of
elements of this set that are drawn i.i.d., and are 
asked to identify the target.
An algorithm is presented which
identifies the target in the limit almost surely
provided the target is an element of a c.e. or co-c.e. 
set of halting algorithms
for probability mass functions (Theorem~\ref{theo.1}). This
underpins partially the result announced in \cite{HCV11}. 
The technical tool is the strong law of large numbers.
In Section~\ref{sect.3} the set of natural numbers 
is finite and the elements 
of the sequence are allowed 
to be dependent. We are given a guaranty that the sequence is 
typical (Definition~\ref{def.typical})
for at least one measure from a c.e. or co-c.e. set 
of halting algorithms for computable measures.
There is an algorithm which identifies in the limit 
a computable measure for which the data sequence is typical
(Theorem~\ref{theo.3}).
The technical tool is the Martin-L\"of theory of sequential
tests \cite{Ma66} based on Kolmogorov complexity.
In Section~\ref{sect.4} we consider the associated predictions, and in
Section~\ref{sect.concl} we give conclusions.
In Appendix~\ref{sect.comput} we review the used computability notions,
in Appendix~\ref{sect.kolmcomp} we review notions of Kolmogorov complexity,
in Appendix~\ref{sect.measure} we review the used measure and computability
notions. We defer the proofs of the theorems to Appendix~\ref{sect.proofs}.

\section{Computable Probability Mass Functions and I.I.D. Drawing}\label{sect.1}
To approximate a probability in the i.i.d. setting is 
well-known and an easy 
example to illustrate our problem. 
One does this by an algorithm computing the probability
$p(a)$ in the limit for all $a \in L \subseteq {\cal N}$ 
almost surely given the infinite sequence $x_1 ,x_2, \ldots$
of data i.i.d. drawn from $L$ according to $p$. 
Namely, for $n=1,2, \ldots$ for every $a \in L$ occurring 
in $x_1 ,x_2, \ldots , x_n$
set $p_n(a)$ equal to the frequency of occurrences of
$a$ in $x_1,x_2, \ldots ,x_n$.
Note that the different values
of $p_n$ sum to precisely 1 for every $n=1,2, \ldots .$
The output is a sequence $p_1, p_2, \ldots$ of 
probability mass functions such that we have
$\lim_{n \rightarrow \infty} p_n=p$ almost surely, 
by the strong law of large numbers (see Claim~\ref{claim.slln}).
The probability mass functions considered here consist of {\em all}
probability mass functions on $L$---computable or not. 
The probability mass function $p$ is 
represented by an approximation algorithm.
In the limit $p$ is reached almost surely.

Here we deal only with computable probability mass functions.
If $p$ is computable then
it can be represented by a halting algorithm which computes it as 
defined in Appendix~\ref{sect.comput}.
Most known probability mass functions are computable provided
their parameters are computable. 
In order that it is computable
we only require that the probability mass function is finitely
describable and there is a computable process producing it \cite{Tu36}.

One issue is how short the code for $p$ is, a second issue
are the computability properties of the code for $p$, a third
issue is how much of the data sequence is used in the learning
process. The approximation 
of $p$ results in a sequence of codes 
of probabilities $p_1,p_2, \ldots$
which are a list of the sample frequencies in an initial finite
segment of the data sequence. The code length of this list grows
to infinity as the length of the segment grows to infinity. 
The learning process
uses all of the data sequence and the result is an encoding of
the sample frequencies in the data sequence in the limit. 
This holds also if $p$ is computable. 

%In the following we learn in finite time
%precisely a computable probability mass function $p$ which almost surely
%generated the given infinite data sequence by i.i.d. draws from 
%a set $L \subset {\cal N}$ distributed according to $p$.
%The length
%of the code of $p$ is finite, based on an appropriate 
%Turing machine.
%Moreover, the learning process uses only a finite initial segment of
%the data sequence. 
%We do not know an upper bound on the length of this segment, 
%but we do know it is
%finite. We identify $p$ in finite time but we do not know this time.

\begin{theorem}\label{theo.1}
{\sc I.I.D. Computable Probability Identification}
Let $L$ be a set of natural numbers and
$p$ be a probability mass function on $L$ which is
an element of a c.e. or co-c.e. set of halting algorithms
for computable probability mass functions.
There is an algorithm identifying $p$ in the limit almost surely from
an infinite sequence $x_1,x_2, \ldots$ of elements of $L$ drawn i.i.d.
according to $p$. 
The code of $p$ via an appropriate Turing machine
is finite. The learning process uses only a finite initial segment
of the data sequence and takes finite time. 
\end{theorem}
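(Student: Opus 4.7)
The plan is to build a learner that, at each stage $n$, maintains a finite pool of candidate halting algorithms and outputs the smallest-indexed candidate whose computed values are currently consistent with the empirical frequencies of the data; the strong law of large numbers (Claim~\ref{claim.slln}) will then force this choice to stabilise, almost surely, on an algorithm equal to $p$ as a probability mass function. Since the eventual output is a single halting-algorithm index, the code of $p$ is automatically finite, and the claims about finite data and finite time follow a posteriori from the existence of a random but finite stabilisation stage $N$.

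I would handle the c.e.\ case first. Enumerate $S$ as $q_1,q_2,\ldots$. At stage $n$ compute the empirical p.m.f.\ $p_n$ from $x_1,\ldots,x_n$, fix a threshold $\epsilon_n\downarrow 0$ and a cutoff $B_n\uparrow\infty$ (for instance $\epsilon_n=n^{-1/4}$ and $B_n=\lfloor\log n\rfloor$), and declare $q_i$ \emph{consistent at stage $n$} if $|p_n(a)-q_i(a)|<\epsilon_n$ for every $a$ appearing in $x_1,\ldots,x_n$ and for every $a\leq B_n$ (the required values of $q_i(a)$ being computed to precision $\epsilon_n/2$, which is possible because $q_i$ is a halting algorithm). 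Output the smallest $i\leq n$, among those enumerated so far, that is consistent.

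Correctness reduces to a pointwise SLLN analysis of this test. If $q_i$ agrees with $p$ as a function then $p_n(a)\to p(a)=q_i(a)$ almost surely for every $a$, so the test succeeds for all large $n$. If $q_i\neq p$, pick $a^\ast$ with $q_i(a^\ast)\neq p(a^\ast)$: when $p(a^\ast)>0$ the symbol $a^\ast$ appears in the sample almost surely and $p_n(a^\ast)\to p(a^\ast)$ is bounded away from $q_i(a^\ast)$; when $p(a^\ast)=0$ we have $p_n(a^\ast)\equiv 0$ and, once $a^\ast\leq B_n$, the fixed gap $q_i(a^\ast)>0$ eventually exceeds $\epsilon_n$. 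So almost surely only finitely many wrong candidates are ever consistent, and the smallest $i$ with $q_i=p$ is the eventual stable output.

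The hard part will be the co-c.e.\ case, where there is no direct enumeration of $S$'s members. Here I would let $S^c$ be effectively enumerated, dovetail the computations of $q_i(a)$ for all $i,a\leq n$ up to $n$ steps at stage $n$, and output the smallest $i\leq n$ that (i) has not yet been enumerated into $S^c$, (ii) has all the values needed for the consistency test already computed, and (iii) passes that test. Correctness then combines three facts: the true target's index $i^\ast$ is never enumerated into $S^c$; $q_{i^\ast}$ halts so all needed values appear by some uniform stage; and the SLLN argument above eliminates, almost surely in finite time, every smaller index that is either in $S$ with $q_i\neq p$ (failing consistency) or outside $S$ (expelled by the $S^c$ enumeration). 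The subtle point that I expect to demand the most care is synchronising the rates at which $\epsilon_n$, $B_n$, the dovetailing depth, and the enumeration of $S^c$ grow, so that every wrong candidate is demonstrably rejected before the output can latch onto it permanently.
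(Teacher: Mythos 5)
Your proposal follows essentially the same route as the paper: invoke the strong law of large numbers, run a consistency test comparing each candidate's values to the empirical frequencies over a growing finite window of symbols with a shrinking threshold, output the least consistent index, and handle the co-c.e.\ case by enumerating the complement and discarding expelled candidates. The only differences are cosmetic choices of parameters (your $\epsilon_n=n^{-1/4}$ and cutoff $a\leq B_n$ versus the paper's $\sqrt{(\lg n)/n}$ and the tail-mass sets $B_{i,n}$), so the argument and its guarantees coincide with the paper's.
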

We do not know how large the finite items in the thorem are.
We give an outline of the proof of Theorem~\ref{theo.1}. 
The proof itself is 
deferred to Appendix~\ref{sect.proofs}. We start by extending
the strong law of large numbers to probability mass functions on subsets
of ${\cal N}$. By assumption the target probability mass function $p$
is a member of a c.e. or co-c.e. set of halting algorithms for 
computable probability mass functions listed as list ${\cal A}$. 
If $q$ is in list ${\cal A}$ and $q=p$, then for every 
$\epsilon > 0$ we have $p(a)-q(a) < \epsilon$ for all $a \in L$.
If $q$ is in list ${\cal A}$ and $q \neq p$, then for some $a \in L$
there is a constant $\delta > 0$ such that $|p(a) -q(a)| > \delta$. 
For every $n = 1,2, \ldots$ we estimate
$p(a)$ for all $a \in L$ by the number of 
occurrences of $a$ in the 
$n$-length initial segment of the provided data sequence.

Let $\#a(x_1, \ldots , x_n)$ denote the number of times 
$a=x_i$ ($1 \leq i \leq n$). For $q_i=p$ almost surely
$\lim_{n \rightarrow \infty}
\max_{a \in L}|\#a(x_1, \ldots , x_n)/n-q_k^n(a)| =0$, and for $q_i \neq p$
almost surely $\lim_{n \rightarrow \infty}
\max_{a \in L}|\#a(x_1, \ldots , x_n)/n-q_i^n(a)| > 0$. 
Hence we determine for 
each $n=1,2, \ldots$  the least index $i$ ($1 \leq i \leq n$) in 
the list ${\cal A}$ for which 
$|q_i (a)-\#a(x_1, \ldots , x_n)/n|$ is minimal. This index is called $i_n$.
Let $q_k=p$ with $k$ least.
Eventually the initial $k$-length segment of the list ${\cal A}$ is
co-computably enumerated. Hence
there is a finite $n_0$ such that for all $n \geq n_0$ 
we have $i_n =k$, but we do not know how large $n_0$ is. 
This means that $p$ is identified in the limit.
\begin{remark}
\rm
Since the c.e. and co-c.e. sets strictly contain 
the computable sets, Theorem~\ref{theo.1}
is strictly stronger than the result in \cite{An88} referred to in
Section~\ref{sect.rel}. It is more theoretical but strictly stronger than 
\cite{BC91} that does not give identification in the limit
for classes of computable functions. 

Define the primitive recursive probability mass functions 
as the set of probability mass functions
for which it is decidable that they are constructed from
primitive recursive functions. Since this set is computable it is c.e..
The theorem shows that
identification in the limit is possible for members of this set.
Define the time-bounded probability
mass functions for any fixed computable time bound as
the set of elements for which it is decidable that they are probability
mass functions
satisfying this time bound. Since this set is computable it is c.e..
Again, the theorem shows that
identification in the limit is possible for elements from this set.

Another example is as follows.
Let $L=\{a_1,a_2, \ldots , a_n\}$ be a finite set. The primitive recursive
functions $f_1,f_2, \ldots$ are c.e.. Hence the probability mass functions
$p_1, p_2, \ldots$ on $L$ defined by $p_i(a_j)= f_i(j)/\sum_{h=1}^n f_i(h)$
are also c.e.. Let us call these probability mass functions
simple. By  Theorem~\ref{theo.1} they can be identified
in the limit.
Following the proof of Theorem~\ref{theo.1} in Appendix~\ref{sect.proofs},
we give another example in Example~\ref{exam.A}.
\end{remark}

\section{Computable Measures}\label{sect.3}
As far as the authors are aware, for general measures 
there exist neither an approximation as in 
Section~\ref{sect.1} nor an analog of the strong law of large numbers.
However, there is a notion of typicality of an infinite data sequence
for a computable measure in the Martin-L\"of theory of sequential
tests \cite{Ma66} based on Kolmogorov complexity,
and this is what we use.

Let $L \subseteq {\cal N}$ be finite and
$\mu$ be a measure on $L^{\infty}$ in a co-c.e. 
set of halting algorithms for computable measures. 
In this paper instead of the common notation $\mu(\Gamma_x)$ we use
the simpler notation $\mu(x)$.
We are given a sequence in $L^{\infty}$
which is typical (Definition~\ref{def.typical}) for $\mu$.
Thus, the constituent elements of the sequence are possibly dependent.
The set of typical 
infinite sequences of a computable measure $\mu$ have $\mu$-measure one,
and each typical sequence passes all computable tests for 
$\mu$-randomness in the sense of Martin-L\"of.
This probability model for $L$ is more general than i.i.d. drawing 
according to a probability mass function. It includes stationary processes,
ergodic processes, Markov processes of any order, and other models.

\begin{theorem}\label{theo.3}
{\sc Computable Measure Identification}
Let $L$ be a finite set of natural numbers. We are given
an infinite sequence of elements from $L$ and this sequence
is typical for one measure in a c.e. or co-c.e.
set of halting algorithms for computable measures. 
There is an algorithm which
identifies a computable measure in the limit
for which the sequence is typical.
The code of this measure is an appropriate Turing machine
and finite. The learning process uses only a finite initial segment
of the data sequence. 
\end{theorem}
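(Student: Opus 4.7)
The plan mirrors the architecture of the proof sketch of Theorem~\ref{theo.1}, replacing the empirical-frequency matching criterion (meaningless when the data may be dependent) by a Martin-L\"of randomness-test criterion. Enumerate the given c.e.\ or co-c.e.\ set of halting algorithms for computable measures as $\mu_1,\mu_2,\ldots$ and let $k$ be the smallest index for which $\omega=x_1 x_2\ldots$ is typical for $\mu_k$. At stage $n$ the algorithm will output the least index $i_n$ that, by stage $n$, has been eliminated neither by a list-membership enumeration (in the co-c.e.\ case) nor by a witness of non-typicality from the universal sequential test; we will argue $i_n=k$ for all but finitely many $n$.

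For each computable measure $\mu_i$ there is a universal lower-semi-computable $\mu_i$-sequential test, equivalently a Levin-style randomness deficiency $d_{\mu_i}(x)=\log(1/\mu_i(x))-K(x)$ (lower-semi-computable in $x$ because $\mu_i$ is computable and $K$ is upper-semi-computable), whose defining property is that $\omega$ is $\mu_i$-typical iff $\sup_n d_{\mu_i}(x_1\ldots x_n)<\infty$. By dovetailing the computation of $\mu_i$, the upper approximation of $K$, and the (co-)enumeration of the candidate list, stage $n$ produces for every $i\le n$ and $m\le n$ a lower bound $\widetilde{d}_i^{\,(n)}(x_1\ldots x_m)$. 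Set $i_n$ to be the least surviving $i\le n$ satisfying $\max_{m\le n}\widetilde{d}_i^{\,(n)}(x_1\ldots x_m)\le c_n$, for a threshold $c_n\to\infty$ chosen slowly (for instance $c_n=\log n$).

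For convergence, $D_k=\sup_n d_{\mu_k}(x_1\ldots x_n)$ is finite by hypothesis, so every lower approximation computed for $\mu_k$ is bounded by $D_k$, and once $c_n>D_k$ the index $k$ is never discarded. For any $i<k$ actually in the candidate list, $\omega$ is not $\mu_i$-typical, so $d_{\mu_i}(x_1\ldots x_n)\to\infty$; hence the lower approximation eventually exceeds $c_n$ and $i$ is permanently removed. In the co-c.e.\ case, any $i<k$ not in the list is eventually co-enumerated into the complement and likewise removed. Thus $i_n$ stabilizes at $k$, and the algorithm outputs a halting-algorithm code for a computable measure making $\omega$ typical, having consumed only a finite initial segment of the data.

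The main obstacle is the coordination of three semi-decidable processes---the c.e.\ or co-c.e.\ list, the lower-semi-computable randomness deficiency, and the growing threshold $c_n$---so that each incorrect index below $k$ is rejected before the algorithm would otherwise commit to it. The threshold must grow fast enough that the unknown bound $D_k$ is eventually dominated, yet slowly enough that the deficiency approximations for the finitely many incorrect candidates below $k$ have time to catch up via sufficient dovetailing. Since only finitely many revisions of $i_n$ are required, identification in the limit follows, and the unavoidable inability to know when stabilization has occurred matches exactly the notion formalized in Section~\ref{sect.0}.
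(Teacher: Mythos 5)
Your architecture (least surviving index, randomness deficiency $d_{\mu_i}(x)=\log(1/\mu_i(x))-K(x)$ as a lower semicomputable test, dovetailing with the (co-)enumeration of the list) matches the paper's, but your threshold scheme has a genuine gap that the paper avoids by a different device. With a single growing threshold $c_n\to\infty$, rejection of a bad candidate $i<k$ is not guaranteed. For such an $i$ the true deficiency satisfies $\sup_m d_{\mu_i}(x_1\ldots x_m)=\infty$, so the monotone approximation $f_i(n)=\max_{m\le n}\widetilde d_i^{\,(n)}(x_1\ldots x_m)$ tends to infinity --- but it may do so arbitrarily slowly (both because the deficiency itself can diverge arbitrarily slowly along the prefixes and because the dovetailed lower approximation can lag arbitrarily). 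Nothing prevents $f_i(n)\le c_n$ for \emph{every} $n$ when $c_n=\log n$; then $i$ is never removed, $i_n$ stabilizes at $i$, and the algorithm outputs a measure for which the sequence is not typical. You name this tension yourself (``fast enough \ldots yet slowly enough''), but no uniform choice of $c_n$ resolves it: any $c_n\to\infty$ can be outrun from below by some bad candidate, while any bounded $c_n$ may fail to admit the good candidate whose finite deficiency bound $D_k$ is unknown.

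The paper escapes this with a per-index \emph{constant} threshold: it lists each measure infinitely often in ${\cal B}$ and accepts $\mu_i$ at stage $n$ only if $\max_{1\le j\le n}\sigma_i^n(j)<i$, i.e., the threshold for index $i$ is $i$ itself. A non-typical candidate has deficiency approximation tending to infinity, hence eventually and permanently exceeds the fixed bound $i$; and because a typical measure $\mu_h$ with finite deficiency $\sigma_h$ recurs in the list, some copy $\mu_{h'}$ of it has index $h'>\sigma_h$ and is accepted forever. That is the missing idea: trade the growing global threshold for a constant threshold per candidate, and use infinite repetition of each measure in the list to guarantee that some correct candidate sits above its own (finite but unknown) deficiency. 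Without this (or an equivalent mechanism), your convergence argument for indices below $k$ does not go through.
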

Let us explain the relation between
Theorem~\ref{theo.1} and Theorem~\ref{theo.3}. 
The set of infinite sequences of i.i.d. draws from a finite set $L$
according to a probability mass function induces
a measure on $L^{\infty}$. Such a measure is called an i.i.d.
measure. The set of computable i.i.d.
measures on $L$ is a proper subset of the set of computable measures on $L$. 
An infinite sequence $x_1,x_2, \ldots$ drawn i.i.d. according to a 
computable probability mass function $p$ on $L$ 
is almost surely typical in
the sense of Definition~\ref{def.typical} for
the induced computable i.i.d. measure $\mu_p$,
and every infinite sequence that is typical for $\mu_p$
is in the set of sequences almost surely drawn i.i.d. according to $p$. 
Hence Theorem~\ref{theo.3} restricted to
i.i.d. measures on finite sets 
implies Theorem~\ref{theo.1} and vice versa.

We give an outline of the proof of Theorem~\ref{theo.3}. 
The proof itself is deferred 
to Appendix~\ref{sect.proofs}. 
Lower semicomputable functions are defined in
Appendix~\ref{sect.comput}. 
Let ${\cal B}$ be a list of a c.e. or co-c.e. set of 
halting algorithms for computable measures with
each measure occurring infinitely many times.
For a measure $\mu$ in the list ${\cal B}$ define
\[
\sigma(j)= \log 1/ \mu (x_1 \ldots x_{j} ) - K(x_1 \ldots x_{j}).
\]
By \eqref{eq.A3}, data sequence $x_1, x_2, \ldots$ 
is typical for $\mu$ iff
$\sup_j \sigma(j) = \sigma$ with $\sigma < \infty$.
By assumption there exists a measure in ${\cal B}$ 
for which the data sequence is typical. Let $\mu_h$ be such a measure
Since algorithms for $\mu_h$ occurs infinitely often in the list ${\cal B}$
there is an algorithm $\mu_{h'}$ in the list ${\cal B}$
with $\sigma_{h'}=\sigma_h$ and
$\sigma_h < h'$. Therefore, there exists a measure
$\mu_k$ in ${\cal B}$ for which the data sequence $x_1, x_2, \ldots$ 
is typical and $\sigma_k < k$ with $k$ least.
If for every $n:=1,2, \ldots$ we compute the least index $i$ 
of $\mu_i$ in ${\cal B}$ such that $\mu_i(x_1, \ldots , x_n)<i$, 
then we identify in the limit a computable measure in ${\cal B}$ 
for which the provided data sequence is typical. 
\begin{remark}
\rm
Let the underlying set $L$ be finite.
Define the primitive recursive measures as the set 
for which it is decidable that they are measures constructed from
primitive recursive functions. Since this set is computable it is c.e..
The theorem shows that
identification in the limit is possible for primitive recursive measures.
Define the time-bounded measures for any fixed computable time bound as
the set of elements for which it is decidable that they are measures
satisfying this time bound. Since this set is computable it is c.e..
Again, the theorem shows that
identification in the limit is possible for elements from this set. 

Let $L$ be a finite set of cardinality $l$, and $f_1,f_2, \ldots$
be a c.e. of the primitive recursive functions. C.e. the strings $x \in L^*$
lexicographical length-increasing. Then every string can be viewed
as the integer giving its position in this order. Define
$\mu_i(\epsilon) = f_i(\epsilon)/f_(\epsilon)=1$, and
inductively for $x \in L^*$ and $a \in L$ 
define $\mu_i(xa)= f_i(xa)/ \sum_{a \in L} f_i(xa)$. 
Then $\mu_i(x)=\sum_{a \in L} \mu_i(xa)$ for all $x \in L^*$.
Call the c.e. $\mu_1, \mu_2, \ldots$ the simple measures.
The theorem shows that
identification in the limit is possible for the set of simple measures.
Following the proof of Theorem~\ref{theo.3} 
in Appendix~\ref{sect.proofs} we show 
another example in Example~\ref{exam.B}.
\end{remark}

\section{Prediction}\label{sect.4}
In Section~\ref{sect.1} 
the data are drawn i.i.d. according to a probability
mass function $p$ on the elements of $L$. Given $p$, 
we can predict the probability $p(a|x_1 ,\ldots, x_n)$ that
the next draw results in an element $a$ when the previous draws
resulted in $x_1 , \ldots , x_n$. The resulting measure on 
$L^{\infty}$ is called an i.i.d. measure. 

For general measures as in Section~\ref{sect.3},
allowing dependent data, the situation is quite different.
We can meet the so-called black swan phenomenon of \cite{Po59}.
Let us give a simple example. The data sequence is $a,a, \dots $
is typical (Definition~\ref{def.typical})
for the measure $\mu_1$ defined by
$\mu_1(x)=1$ for every data sequence $x$
consisting of a finite or infinite string of $a$'s
and $\mu_1(x)=0$ otherwise.
But $a,a, \ldots$ is also typical for $\mu_0$ which gives probability 
$\mu_0(x) = \frac{1}{2}$ for every string $x$ either consisting of 
a finite or infinite string of $a$'s, or a fixed number $n$ of 
$a$'s followed by a finite or
infinite string of $b$'s, and 0 otherwise. Then, $\mu_1$ and $\mu_0$
can give different predictions given a sequence of $a$'s.
But given a data sequence consisting initially
of only $a$'s, a sensible algorithm will predict $a$ as the most
likely next symbol.
However, if the initial data sequence consists of $n$ symbols $a$, then
for $\mu_1$ the next symbol will be $a$ with probability 1, 
and for $\mu_0$ the next
symbol is $a$ with probability $\frac{1}{2}$ and $b$ with probability 
$\frac{1}{2}$. Therefore, while the i.i.d. case allows
us to predict reliably, in the dependent case there is in general
no reliable predictor for the next symbol. 
In \cite{BD62} Blackwell and Dubin show that
under certain conditions predictions of two 
measures merge asymptotically
almost surely. 
%However, ``In the long run we are all dead'' \cite{Ke23}.

\section{Conclusion}\label{sect.concl}
Using an infinite sequence of elements from a set of natural numbers,
algorithms are exhibited that identify in the limit 
the probability distribution associated with this set. 
This happens in two cases:
(i) the target distribution is 
a probability mass function (i.i.d. measure) in a c.e. or co-c.e. set
of computable probability mass functions (computable i.i.d. measures) and
the elements of the sequence are drawn i.i.d. according
to this probability (Theorem~\ref{theo.1});
(ii) the underlying set is finite and the infinite sequence 
is possibly dependent and typical for a computable measure in a 
c.e. or co-c.e. set of computable measures 
(Theorem~\ref{theo.3}).

In the i.i.d. case the target computable probability mass function 
is identified in the limit almost surely, 
in the dependent case the target computable measure is
identified in the limit surely---it is
one out of a set of satsfactory candidate computable measures. 
In the i.i.d. case we use the strong law of large numbers. 
For the dependent case we 
use typicality according to the theory 
developed by Martin-L\"of in \cite{Ma66} embedded in theory of
Kolmogorov complexity. The i.i.d. result is actually a corollary of the
dependency result. 

In both the i.i.d. setting and the dependent setting, eventually we guess
an index of the target (or one target out of many possible targets in the 
measure case) and stick to this guess forever. This last guess is correct. 
However, we do not know when the guess becomes permanent.
We use only a finite unknown-length
initial segment of the data sequence. 
The target for which the guess is correct
is described by a an appropriate Turing machine computing
the probability mass function or measure, respectively.

\appendix

\subsection{Computability}\label{sect.comput}
We can interpret a pair of integers such as $(a,b)$ as rational $a/b$.
A real function $f$ with rational argument is \emph{lower semicomputable}
if it is defined by a rational-valued computable function $\phi(x,k)$
 with $x$ a rational number and $k$ a nonnegative integer
such that $\phi(x,k+1) \geq \phi(x,k)$ for every $k$ and
  $\lim_{k \rightarrow \infty} \phi (x,k)=f(x)$.
This means that $f$ 
 can be computably approximated arbitrary close from below
 (see \cite{LV08}, p. 35).  A function $f$ is  \emph{upper semicomputable}
if $-f$ is semicomputable from below.
 If a real function is both lower semicomputable 
and upper semicomputable then it is \emph{computable}.
A function $f$ is a {\em semiprobability mass function} if 
$\sum_x f(x) \leq 1$ and it is a {\em probability} mass function 
if $\sum_x f(x) = 1$. It is customary to write $p(x)$ for
$f(x)$ if the function involved is a semiprobability mass function.

A set $A \subseteq {\cal N}$ is {\em computable enumerable} (c.e.)  when
we can compute a list $a_1,a_2, \ldots $ of which all elements are members 
of $A$. A c.e. set is also
called recursively enumerable (r.e.). A {\em co-c.e.} set 
$B \subseteq {\cal N}$ is a set whose 
complement ${\cal N} \setminus B$ is c.e.. 
If a set is both c.e. and co-c.e. then it is computable. 
The natural numbers above can be indexes.

Let us explain the relation with identification in the limit.
We explain this for the more complicated case of co-c.e. sets. The
case for c.e. sets is similar. 
Consider a  computable enumeration $o_1, o_2, \ldots$  of a set $O$ 
of objects. A co-c.e. set $S$  is a sublist $C$ of $o_1, o_2, \ldots$ 
such that $C = \{o_i : i \in S\}$. 
The members of $C$ are 
the good objects and the members of $O \setminus C$ the bad objects. 
We computably enumerate the bad objects.
We do not know in what order the bad objects are enumerated or repeated; 
however we do know that the remaining items are the good objects.
These good objects with possible repetitions 
form a list ${\cal A}$, a scattered sublist of the original computable 
enumeration of $O$. This list ${\cal A}$ is a co-c.e. set. 
It takes unknown time to 
enumerate each initial segment of ${\cal A}$,
but we are sure this happens eventually. 
Hence to identify the $k$th element in the list ${\cal A}$
while requiring the first $1, \ldots, k-1$ elements requires 
identification in the limit.

It is known that the overwhelming majority of real numbers are not
computable. If a real number
$a$ is lower semicomputable but not computable, 
then we can computably find nonnegative
integers $a_1, a_2, \ldots$ and $b_1, b_2, \ldots$ such that
$a_n/b_n \leq a_{n+1}/b_{n+1}$ and
$\lim_{n \rightarrow \infty} a_n/b_n = a$. If $a$ is the
probability of success in a trial then this gives an example
of a lower semicomputable probabity mass function which is not computable.
Suppose we are concerned with all and only computable probability
mass functions. There are countably many since there are only
countably many computable functions. But can we computably enumerate them?
The following lemma holds even if the functions are rational valued.

\begin{lemma}\label{lem.incomp}
(i) Let $L \subseteq {\cal N}$ and infinite.
The computable probability mass functions 
on $L$ are not c.e.. 

(ii) Let $L \subseteq {\cal N}$, finite, and $|L|\geq 2$. 
The computable measures on $L$ are not c.e..
\end{lemma}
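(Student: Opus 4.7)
The plan is to prove both parts by diagonalization against the hypothetical c.e. listing. Suppose for contradiction that the (indices of halting algorithms for) computable probability mass functions on $L$ (respectively computable measures on $L^\infty$) form a c.e. set. Then we obtain a uniformly computable listing $p_1,p_2,\ldots$ (respectively $\mu_1,\mu_2,\ldots$) that contains every such function. Uniform computability means in particular that for every $n$, every argument $x$, and every rational $\epsilon>0$ we can compute a rational $r$ with $|r - p_n(x)| \leq \epsilon$ (or $|r-\mu_n(x)| \leq \epsilon$). Our task is then to build a single computable probability mass function (respectively measure) that disagrees with every $p_n$ (respectively $\mu_n$) at one carefully chosen argument, contradicting completeness of the listing.

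For part (i), enumerate the infinite set $L$ as $a_1,a_2,\ldots$ and pair its elements into blocks $(a_{2n-1},a_{2n})$ of total budget $T_n = 2^{-n}$, so $\sum_n T_n = 1$. For each $n$ compute a rational $r_n$ with $|r_n - p_n(a_{2n-1})| \leq T_n/8$; if $r_n \leq T_n/2$ set $p(a_{2n-1}) = 3T_n/4$, otherwise set $p(a_{2n-1}) = T_n/4$. In each case $p(a_{2n-1})$ differs from $p_n(a_{2n-1})$ by at least $T_n/8$, so the disagreement is certified from the single approximation $r_n$. Setting $p(a_{2n}) = T_n - p(a_{2n-1})$ then makes $p$ a computable probability mass function on $L$ with $p \neq p_n$ for every $n$, contradicting completeness.

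For part (ii), fix two distinct letters $a,b \in L$. We specify a computable measure $\mu$ on $L^\infty$ by the sequence $c_n := \mu(a^n)$; any mass that leaves the spine $a^{*}$ at step $n$ is routed through $a^{n-1}b$ (so $\mu(a^{n-1}b) = c_{n-1}-c_n$ and $\mu(a^{n-1}e) = 0$ for $e \notin \{a,b\}$) and then spread uniformly over the subtree rooted there, which yields a valid and computable measure once the $c_n$ are computable. We build $c_n$ inductively with $c_0 = 1$ and invariant $c_n \geq c_{n-1}/2$. At step $n$ we approximate $\mu_n(a^n)$ within $c_{n-1}/8$ and, by the same two-case trick as in part (i), pick $c_n \in \{c_{n-1}/2,\, 15c_{n-1}/16\}$ with a certified gap of at least $c_{n-1}/16$ from $\mu_n(a^n)$. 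The invariant forces $c_n \geq 2^{-n} > 0$ throughout, so the required accuracies are always attainable. The resulting computable measure satisfies $\mu(a^n) \neq \mu_n(a^n)$ for every $n$, so $\mu \neq \mu_n$, giving the contradiction.

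The only delicate point is that equality between computable reals is undecidable, so we cannot simply test whether $p_n(a_n)$ (or $\mu_n(a^n)$) equals some rational candidate. This is why each diagonalization step chooses between \emph{two} candidate values that are well-separated relative to the approximation accuracy, guaranteeing from a single finite-precision computation that at least one of them differs provably from the true value.
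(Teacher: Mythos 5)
Your proof is correct, and it shares the paper's basic strategy: diagonalize against the hypothetical c.e. listing by producing a computable probability mass function (resp.\ measure) that provably differs from the $i$th listed function at a designated argument. The execution, however, is genuinely different, and yours is the more careful of the two. For (i) the paper also pairs consecutive elements of $L$, but it perturbs the listed functions' own values (roughly $p(a_i):=p_i(a_i)+p_i(a_i)p_{i+1}(a_{i+1})$ with the complementary subtraction at $a_{i+1}$), which leaves unaddressed why the resulting values sum to $1$ (the perturbed values are drawn from \emph{different} $p_i$'s), why the perturbation is nonzero (it vanishes when $p_i(a_i)=0$), and how one computes with exact values of computable reals. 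Your block-budget construction with $T_n=2^{-n}$ plus the two-candidate trick disposes of all three points at once: the total mass is $1$ by design, the disagreement is at least $T_n/8$ by design, and only a single finite-precision approximation is ever consulted --- your closing remark about the undecidability of equality of computable reals is exactly the issue the paper's sketch glosses over. For (ii) the paper merely asserts that the statement ``reduces to (i)'' because the cylinders are c.e.; this is not a literal reduction, since cylinder probabilities satisfy the consistency conditions $\mu(x)=\sum_{a\in L}\mu(xa)$ rather than summing to $1$, so a measure is not a probability mass function on the set of cylinders. Your direct spine construction along $a^n$ with the invariant $c_n\geq c_{n-1}/2$ supplies the argument the paper omits. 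The one hypothesis worth stating explicitly in both proofs is that $L=\{a_1,a_2,\ldots\}$ must be effectively enumerable for the diagonal to be computable; the paper makes the same tacit assumption.
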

\begin{proof}
(i) Assume to the contrary that the lemma is false 
and the computable enumeration is
$p_1,p_2, \ldots$.
Compute a probability mass function $p$
with $p(a) \neq p_i(a_i)$ for $a_i \in L$ is the $i$th element of $L$ 
As follows. If $i$ is odd then 
$p(a_i) := f_i(a_i)+f_i(a_i)f_{i+1}(a_{i+1})$
and $p(a_{i+1}) := f_{i+1}(a_{i+1})-f_i(a_i)f_{i+1}(a_{i+1})$,
By construction $p$ is a computable probability
mass function but
different from any $p_i$ in the enumeration $p_1, p_2, \ldots$.

(ii) Since $L$ is finite the set $L^*$ is c.e..
Hence the set of cylinders in $L^{\infty}$ is c.e..
Therefore (ii) reduces to (i). 
\end{proof}

\subsection{Kolmogorov Complexity}\label{sect.kolmcomp}
We need the theory of Kolmogorov complexity \cite{LV08} (originally
in \cite{Ko65} and the prefix version we use here in \cite{Le74}). 
A prefix Turing machine is is a Turing machine with a one-way 
read-only input
tape with an distinguished tape cell called the {\em origin}, 
a finite number of two-way read-write working tapes 
on which the computation takes place, an auxiliary tape on
which the auxiliary string $y \in \{0,1\}^*$ is written, and
a one-way write-only output tape. 
At the start of the computation the input tape
is infinitely inscribed from the origin onwards, and the input head
is on the origin. The machine operates with a binary alphabet.
If the machine halts then the input head has scanned
a segment of the input tape from the origin onwards. We
call this initial segment the {\em program}.  

For every auxiliary $y \in \{0,1\}^*$, 
the set of programs is a prefix code: no program is a proper
prefix of any other program. 
Consider a standard enumeration of all prefix Turing
machines 
\[
T_1, T_2, \ldots . 
\]
Let $U$ denote a 
prefix Turing machine such that for every $z,y\in\{0,1\}^{*}$ and $i\geq1$
we have $U(i,z,y)=T_{i}(z,y)$. That is, for each finite binary program
$z$, auxiliary $y$, and machine index $i\geq1$, 
we have that $U$'s execution
on inputs $i$ and $z,y$ results in the same output as that obtained
by executing $T_{i}$ on input $z,y$. We call such a $U$ a {\em universal}
prefix Turing machine. 

However, there are more ways a prefix Turing machine
can simulate other prefix Turing machines. For example,
let $U'$ be such that $U'(i,zz,y)=T_i(z,y)$ for all $i$ and $z,y$,
and $U'(p)=0$ for $p$ is not $i,zz,y$ for some $i$ and $z,y$.
Then $U'$ is universal also. To distinguish machines like $U$ from
other universal machines, Kolmogorov \cite{Ko65} called machines like $U$
{\em optimal}. 

Fix an optimal machine, say $U$. Define
the conditional {\em prefix Kolmogorov complexity} $K(x|y)$ 
for all $x,y \in \{0,1\}^*$ by 
$K(x|y)=\min_p \{ |p|: p\in\{0,1\}^{*}\:\textrm{and}\: U(p,y)=x\}$.
For the same $U$, define the {\em time-bounded conditional 
prefix Kolmogorov complexity}
$K^t(x|y)=\min_p\{ |p|:\, p\in\{0,1\}^{*}\:
\textrm{and}\; U(p,y)=x\;\textrm{in $t$ steps}\} $.
To obtain the unconditional versions of the prefix Kolmogorov complexities
set $y = \lambda$ where $\lambda$ is the {\em empty} word
(the word with no letters).
It can be shown that $K(x|y)$ is incomputable \cite{Ko65}. 
Clearly $K^t(x|y)$
is computable if $t < \infty$. Moreover,
$K^{t'} (x|y) \leq K^t(x|y)$ for every $t' \geq t$, and
$\lim_{t \rightarrow \infty} K^t(x|y) = K(x|y)$.

\subsection{Measures, Semimeasures, and Computability}\label{sect.measure}
Let $L \subseteq {\cal N}$ and finite.
Given a finite sequence $x=x_1, x_2, \ldots , x_n$ of elements
of $L$, we consider the set of
infinite sequences starting with $x$. The set of all such sequences is
written as $\Gamma_x$, the {\em cylinder} of $x$. 
We associate a
probability $\mu (\Gamma_x)$ with the event 
that an element of $\Gamma_x$ occurs.
Here we simplify the notation $\mu (\Gamma_x)$ and write $\mu (x)$.
The transitive closure of the intersection, complement, 
and countable union of cylinders gives a set of subsets of $L^{\infty}$.
The probabilities associated with these subsets are derived from the
probabilities of the cylinders in standard ways \cite{Ko33a}.
A {\em semimeasure} $\mu$ satisfies the following:
\begin{eqnarray}\label{eq.me}
&& \mu (\epsilon) \leq 1 
\\&& \mu (x) \geq \sum_{a \in L} \mu(xa), 
\nonumber
\end{eqnarray}
and if equality holds instead of each inequality 
we call $\mu$ a {\em measure}.
Using the above  notation, 
a semimeasure $\mu$ 
is {\em lower semicomputable} 
if it is defined by a rational-valued  computable function $\phi(x,k)$
 with $x \in L^*$ 
and $k$ a nonnegative integer
such that $\phi(x,k+1) \geq \phi(x,k)$ for every $k$ and
  $\lim_{k \rightarrow \infty} \phi (x,k)=\mu(x)$.
This means
  that $\mu$ can be computably approximated arbitrary close from below
for each argument $x \in L^*$.

Let $x_1 ,x_2 , \ldots$ be an infinite sequence of elements of
$L$. The sequence is typical
for a computable measure $\mu$ if it passes 
all computable sequential tests (known and unknown alike)
for randomness with respect to $\mu$ in the sense of
Martin-L\"of \cite{Ma66}. One of the highlights of the theory 
of Martin-L\"of is that the sequence passes all these tests iff it passes
a single universal test, \cite{LV08} Corollary 4.5.2 on p 315,
see also \cite{Ma66}.
\begin{definition}\label{def.typical}
\rm
Let $x_1 ,x_2 , \ldots$ be an infinite sequence of elements of
$L \subseteq {\cal N}$ with $L$ finite. The sequence is {\em typical} or
{\em random} for a computable measure $\mu$ iff 
\begin{equation}\label{eq.A3}
\sup_n \{\log \frac{1}{\mu (x_1 \ldots x_n )} - K(x_1 \ldots x_n )\}
< \infty .  
\end{equation}
\end{definition}
The set of infinite sequences that are typical with respect
to a measure $\mu$ have $\mu$-measure one. 
The theory and properties of such sequences for computable measures
 are extensively treated 
in \cite{LV08} Chapter 4. There the term $K(x_1 \ldots x_n )$ in 
\eqref{eq.A3} is given as $K(x_1 \ldots x_n |\mu)$. However, since 
$\mu$ is computable we have $K(\mu) < \infty$ and therefore 
$K(x_1 \ldots x_n |\mu) \leq K(x_1 \ldots x_n)+O(1)$.

\begin{example}
\rm
Let us elucidate by example the notion
of typicality.
Let $\mu_k$ be a measure defined
by $\mu_k(x_ 1\ldots x_n)=1/k$ for $x_i = a$ for
every $1 \leq i \leq n$ and a fixed 
$a \in \{1, \ldots, k\}$,
and $\mu_k(x_ 1\ldots x_n)=0$ otherwise. Then $K(a \ldots a)$ (a sequence
of $n$ elements $a$) equals 
$K(i,n)+O(1) = O(\log n + \log k)$.
(A sequence of $n$ elements $a$ is described by $n$ in $O(\log n)$
bits and $a$ in $O(\log k)$ bits.) 
By \eqref{eq.A3} we have $\sup_{n \in {\cal N}} \{ \log 1/\mu_k(a \ldots a) 
- K(a \ldots a) \} < \infty$. Therefore the infinite 
sequence $a a \ldots$ is typical
for every $\mu_k$. Similarly, 
the infinite sequence $y_1,y_2, \ldots $ 
is not typical for $\mu_k$ for $y_i \in \{1, \ldots ,k\}$ ($i \geq 1$)
and $y_i \neq y_{i+1}$ for some $i$.
Namely, $\sup_{n \in {\cal N}} \{ 1/\mu_k(y_1 y_2\ldots y_n) - 
K(y_1y_2\ldots y_n) \} = \infty$.
\end{example}
The example shows that an infinite sequence of data can be typical for more
than one measure. Hence our task is not to identify a single 
computable measure according to which the
data sequence was generated as a typical sequence,
but to identify a computable measure that {\em could}
have generated the data sequence as a typical sequence.

\subsection{Proofs of the Theorems}\label{sect.proofs}
\begin{proof} {\sc of Theorem~\ref{theo.1}: I.I.D. Computable Probability 
Identification}.
Let $L \subseteq {\cal N}$, and
$X_1, X_2, \ldots $ be a sequence of mutually independent random variables,
each of which is a copy of a single random
variable $X$ with probability mass function $P(X=a)=p(a)$ for $a \in L$. 
Without loss of generalty $p(a)>0$ for all $a \in L$.
Let $\#a(x_1,x_2, \ldots, x_n)$ denote the number of times 
$x_i =a$ ($1 \leq i \leq n$).
\begin{claim}\label{claim.slln}
If the outcomes of the random variables $X_1,X_2, \ldots$ are 
$x_1,x_2, \ldots ,$
then almost surely for all $a \in L$ we have
\begin{equation}\label{eq.strong}
\lim_{n \rightarrow \infty} \; 
\left( p(a)-\frac{\#a(x_1,x_2, \ldots ,x_n)}{n} \right) = 0.
\end{equation}
\end{claim}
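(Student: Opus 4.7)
My plan is to reduce the claim to the classical Kolmogorov strong law of large numbers applied separately to each element $a \in L$, and then handle the ``for all $a \in L$'' quantifier by a countable-union-of-null-sets argument.

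First I would fix an arbitrary $a \in L$ and define the indicator random variables $Y_i^{(a)} = \mathbf{1}[X_i = a]$ for $i = 1,2,\ldots$. Since the $X_i$ are i.i.d.\ with $P(X_i = a) = p(a)$, the $Y_i^{(a)}$ are i.i.d.\ Bernoulli random variables with mean $p(a) \in [0,1]$, and in particular have finite expectation. By Kolmogorov's strong law of large numbers,
\[
\frac{1}{n}\sum_{i=1}^n Y_i^{(a)} \;\longrightarrow\; p(a) \qquad \text{almost surely.}
\]
Since $\sum_{i=1}^n Y_i^{(a)} = \#a(x_1,\ldots,x_n)$, this is exactly \eqref{eq.strong} for the single element $a$. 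Let $N_a$ denote the (measurable) event on which this convergence fails; then $P(N_a) = 0$.

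Next I would handle the simultaneous statement over all $a \in L$. Because $L \subseteq {\cal N}$ is countable, the union $N := \bigcup_{a \in L} N_a$ is a countable union of null sets, hence $P(N) = 0$. On the complementary event $\Omega \setminus N$, which has probability one, the convergence \eqref{eq.strong} holds simultaneously for every $a \in L$. This establishes the claim.

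The main technical point, and the only place any care is needed, is the reduction to a countable union of null sets when $L$ is infinite; if one instead tried to uniformize the convergence over $a \in L$ (e.g.\ by taking $\max_{a \in L}$ at the start) one could run into trouble because $L$ can be infinite and the supremum of the deviations need not tend to zero pointwise in $\omega$ without an additional argument. The remark in the earlier outline that ``$\max_{a \in L}|\#a(x_1,\ldots,x_n)/n - q^n(a)| \to 0$ almost surely'' is a separate, stronger statement; but for Claim~\ref{claim.slln} as stated, the pointwise-in-$a$ convergence on a single probability-one event is exactly what one gets for free from the classical SLLN plus countable subadditivity, with no further work required.
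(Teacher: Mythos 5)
Your proposal is correct and follows essentially the same route as the paper: reduce to a Bernoulli variable for each fixed $a\in L$, apply Kolmogorov's strong law, and conclude. In fact your version is slightly more careful on two points where the paper is loose: the paper's auxiliary variable $X_a$ takes the numerical values $a$ and $\bar a=\min(L\setminus\{a\})$, so its sample average converges to the mean $aq_a+\bar a(1-q_a)$ rather than to $q_a$ as asserted (using the $0$/$1$ indicator $Y_i^{(a)}=\mathbf{1}[X_i=a]$, as you do, is the correct formulation), and the paper leaves implicit the countable-union-of-null-sets step needed to pass from ``for each $a$, almost surely'' to ``almost surely, for all $a$'' when $L$ is infinite, which you spell out explicitly.
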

\begin{proof}
The strong law of large numbers (originally in \cite{Ko33})
states that if we perform the same 
experiment a large number of times, then almost surely 
the number of successes divided by the number of trials
goes to the expected value, 
provided the mean exists, see 
the theorem on top of page 260 in \cite{Fe68}. 
To determine the probability of an $a \in L$ we consider the 
random variables $X_a$ with just two outcomes $\{a, \bar{a}\}$. 
This $X_a$ is
a Bernoulli process $(q_a,1-q_a)$ where 
$q_a=p(a)$ is the probability of $a$ and 
$1-q_a= \sum_{b \in L\setminus \{a\}} p(b)$ 
is the probability of $\bar{a}$. If we set
$\bar{a} = \min \; (L \setminus \{a\})$, 
then the mean $\mu_a$ of $X_a$ is 
\[
\mu_a = aq_a+\bar{a}(1-q_a) \leq \max \{a, \bar{a}\} < \infty.
\]
Thus, every $a \in L$ incurs a random
variable $X_a$ with a finite mean.
Therefore, 
$(1/n)\sum_{i=1}^n (X_a)_i$ converges 
almost surely to $q_a$ as $n \rightarrow \infty$.
The claim follows.
\end{proof}

Let ${\cal A}$ be a list of a c.e. or co-c.e. set of algorithms for the
computable probability mass functions.
If $q \in {\cal A}$ and $q = p$ then for every $\epsilon >0$ 
and $a \in L$ holds $p(a)-q(a)< \epsilon$.
By Claim~\ref{claim.slln}, almost surely  
\begin{equation}\label{eq.=0}
\lim_{n \rightarrow \infty} \max_{a \in L} \; 
\left(q_i(a) -\frac{\#a(x_1,x_2, \ldots ,x_n)}{n} \right)
= 0.
\end{equation}
If $q \in {\cal A}$ and $q \neq p$ then 
there is an $a \in L$ and a constant $\delta >0$ 
such that $|p(a)-q(a)| > \delta$. 
Again by Claim~\ref{claim.slln}, almost surely
\begin{equation}\label{eq.>0}
\lim_{n \rightarrow \infty} \; 
\max_{a \in L} \left|q_i(a) -\frac{\#a(x_1,x_2, \ldots ,x_n)}{n}\right|
> \delta .
\end{equation}
In the proof of the strong law of large numbers
it is shown that if we draw $x_1,x_2, \ldots$ i.i.d. from a set 
$L \subseteq {\cal N}$ according to a probability mass function $p$
then almost surely the size of the fluctuations in going to the limit
\eqref{eq.=0} 
satisfies $|np(a) - \#a(x_1,x_2, \ldots ,x_n)|/\sqrt{np(a)p(\bar{a})}
< \sqrt{2 \lambda \lg n}$ for every $\lambda >1$ and $n$ is
large enough for all $a \in L$, see \cite{Fe68} p. 204.
Here $\lg$ denotes the natural logarithm. 
Since $p(a)p(\bar{a}) \leq \frac{1}{4}$ and $\lambda = \sqrt{2}$
suffices we obtain $|p(a) - \#a(x_1,x_2, \ldots ,x_n)/n|
< \sqrt{(\lg n)/n}$ for all but finitely many $n$.

Let $q \in {\cal A}$. 
For $q \neq p$ there is an $a \in L$ such that
by \eqref{eq.>0} and the fluctuations in going to that
limit we have $|q(a) - \#a(x_1,x_2, \ldots ,x_n)/n|
> \delta - \sqrt{(\lg n)/n}$ for all but finitely many $n$. 
Since $\delta >0$ is constant,
we have $2\sqrt{(\lg n)/n} < \delta$ for all but finitely many $n$. Hence
$|q(a) - \#a(x_1,x_2, \ldots ,x_n)/n| > \sqrt{(\lg n)/n}$
for all but finitely many $n$.

Let ${\cal A}=q_1,q_2, \ldots$ and $p=q_k$ with $k$ least.
We give the algorithm  with as output a sequence of indexes 
$i_1,i_2, \ldots$ such that all but finitely
many indexes are $k$.
If $L$ is infinite then the algorithm can only use a finite subset of it.
Hence we need to define this finite subset
and show that the remaining elements can be ignored.
Let $A_n = \{a \in L: 
\#a(x_1,x_2, \ldots ,x_n) > 0\}$. 
In case $a \neq A_n$ then $|q(a) - \#a(x_1,x_2, \ldots ,x_n)/n| =q_i(a)$.
We disregard $q_i(a) < \sqrt{(\lg n)/n}$ as follows. 
Let $L=\{a_1,a_2, \ldots \}$.
For each $q_i$ define the set
$B_{i,n} = \{a_1, \ldots, a_m\}$ with $m$ least such that 
$\sum_{j=m+1}^{\infty} q_i(a_j) = 1-\sum_{j=1}^m q_i(a_j) 
< \sqrt{1/n}$.
Therefore, if $a \in L \setminus B_{i,n}$ then $q_i(a) < \sqrt{1/n}$.
The sets $A_n$ and $B_{i,n}$ are finite for all $n$ and $i$. 
Set $L_{i,n}=A_n \bigcup B_{i,n}$. 
Then for every 
$a \in L$ we have 
$|q_k(a)- \#a(x_1,x_2, \ldots ,x_n)/n | \leq  \sqrt{(\lg n)/n}$
for all but finitely many $n$. For $i \neq k$ there is an 
$a \in L_{k,n}$ but no $a \in L \setminus L_{k,n}$ such that
$|q_i(a)- \#a(x_1,x_2, \ldots ,x_n)/n | >
\sqrt{(\lg n)/n}$ for all but finitely many $n$. This leads to the 
following algorithm:

\begin{tabbing}
{\bf for} \= $n:=1,2, \ldots$\\

\> $I:=\emptyset$\=; {\bf for} $i:=1,2, \ldots, n$\\

\> \>{\bf if}
$\max_{a \in L_{i,n}} |q_i(a) - \#a(x_1,x_2, \ldots ,x_n)/n | 
 < \sqrt{(\lg n)/n}$\\

\>\> {\bf then} $I:=I \bigcup \{i\}$; \\

\> $i_n := \min I$
\end{tabbing}

With probability 1 
for every $i < k$ for all but finitely many $n$ we have $i \not\in I$ 
while $k \in I$ for all but finitely many $n$. 
(Note that for every $n=1,2, \ldots$ the main term in the above algorithm is
computable even if $L$ is infinite.) 
The theorem is proven. 
\end{proof}

\begin{example}\label{exam.A}
\rm
We give an example of a list ${\cal A}$ of a co-c.e. 
set halting algorithms for computable probability mass functions.
This set is large but does not contain all probability mass
functions.
A semiprobability mass function is a function for which the
values sum to at most 1.

First we obtain a computable co-enumeration of computable total functions 
which is not c.e..
Let $f: {\cal N} \rightarrow {\cal N}$ 
be a computable time-bound such as the Ackermann function, a total computable
function growing faster than any primitive recursive function, 
and $\phi_1, \phi_2, \ldots$ a standard 
computable enumeration of all partial computable functions.
Computably enumerate all $\phi_i$ such that $\phi_i(j)$ does not
halt within $f(j)$ steps for all $i,j \geq 1$. Eliminate all those from 
$\phi_1, \phi_2, \ldots .$ The result is a subsequence of the original
computable enumeration, a computable co-enumeration of 
total computable functions $\psi_1, \psi_1, \ldots$ which are time bounded
by $f$.
\begin{claim}\label{claim.allq}
\rm
Given a computable co-enumeration of 
computable total functions, one can exhibit 
a computable total function $\phi(i,x,n)=q_i^n(x)$
such that $\phi(i,x,n) \leq \phi(i,x,n+1)$ and
$\lim_{n \rightarrow \infty} q_i^n(x) = q_i(x)$ iff $q_i$ is
a lower semicomputable semiprobability mass function. 
\end{claim}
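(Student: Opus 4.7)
The plan is to perform the standard stage-wise truncation that turns an arbitrary rational-valued computable total approximation into a monotone lower-semicomputable approximation of a semiprobability mass function. First I fix a pairing function $\langle\cdot,\cdot\rangle$ and reinterpret each $\psi_i$ in the co-enumeration as a rational-valued computable total function $g_i(x,k)$ of two nonnegative arguments (reading the integer output as a pair $(a,b)$ with rational value $a/(b+1)$, and clipping values outside $[0,1]$ to that interval). Define $\phi(i,x,n)=q_i^n(x)$ inductively: put $q_i^0(x)=0$, and at stage $n\geq 1$ scan $x=0,1,\ldots,n$ in order and set
\[
q_i^n(x)=\max\bigl(q_i^{n-1}(x),\;\min(M_i^n(x),B_i^n(x))\bigr),
\]
where $M_i^n(x)=\max_{k\leq n}g_i(x,k)$ is the best lower candidate seen so far and $B_i^n(x)=1-\sum_{x'<x}q_i^n(x')-\sum_{x'>x}q_i^{n-1}(x')$ is the remaining budget; for $x>n$ keep $q_i^n(x)=q_i^{n-1}(x)$. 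Each update uses only finitely many rational operations on previously computed values of $g_i$, so $\phi$ is computable and total.

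For the ``only-if'' direction, monotonicity in $n$ is built into the formula, and an easy inductive invariant on the running sum $\sum_{x'<x}q_i^n(x')+\sum_{x'\geq x}q_i^{n-1}(x')\leq 1$ maintains $\sum_{x'}q_i^n(x')\leq 1$ at every stage. Consequently $q_i(x)=\lim_n q_i^n(x)$ exists in $[0,1]$ and, by monotone convergence on the sum, satisfies $\sum_x q_i(x)\leq 1$; the monotone rational approximation $\phi(i,x,\cdot)$ itself witnesses that $q_i$ is lower semicomputable, so $q_i$ is a lower semicomputable semiprobability mass function. For the converse, given any such $q$, pick a rational-valued computable total function $h(x,k)$ non-decreasing in $k$ with $h(x,k)\nearrow q(x)$; then $\sum_x h(x,k)\leq\sum_x q(x)\leq 1$ holds automatically. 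We must show some such $h$ appears (under the pairing encoding) as a $\psi_j$ in the co-enumeration, for then our construction applied to $\psi_j$ performs no truncation and returns $q_j=q$.

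The one real obstacle is reconciling the Ackermann-type time restriction that defines the co-enumeration $\psi_1,\psi_2,\ldots$ with the need to capture every lower semicomputable semiprobability mass function. This is handled by the standard slow-down: replace $h$ by $h'(x,k)=h(x,k')$, where $k'\leq k$ is the largest integer for which $h(x,k')$ terminates within $f(\langle x,k\rangle)$ steps. Because $f$ grows faster than any fixed primitive-recursive bound, and in particular faster than the actual running time of any chosen $h$, we have $k'\to\infty$ as $k\to\infty$, hence $h'(x,k)\to q(x)$; by construction $h'$ is monotone in $k$ and respects the time bound, so it is among the $\psi_j$. Everything else---computability of $\phi$, monotonicity, the sum bound, and lower semicomputability of the limit---follows mechanically from the stage-wise construction.
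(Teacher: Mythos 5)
Your stage-wise truncation with a running budget is a correct and fully explicit rendering of the conversion the paper merely cites (the proof of Theorem 4.3.1 in \cite{LV08}): it produces a total computable $\phi(i,x,n)$, monotone in $n$, whose limit functions are always lower semicomputable semiprobability mass functions, and which reproduces $q$ without truncation whenever $\psi_i$ already encodes a monotone approximation of $q$ with limits summing to at most $1$. On that forward direction you and the paper take essentially the same route, yours just being spelled out.

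The genuine gap is in your converse, the claim that \emph{every} lower semicomputable semiprobability mass function is captured. Your argument rests on the assertion that the Ackermann-type bound $f$ grows ``faster than the actual running time of any chosen $h$.'' That is false: $f$ dominates every primitive recursive function, but the running time of an arbitrary total computable $h$ need not be primitive recursive and may itself dominate $f$. Moreover, the slowed-down $h'$ you construct does not meet the time bound defining the co-enumeration: on input $j=\langle x,k\rangle$ it must determine the budget $f(j)$ and then simulate $h(x,k')$ for up to $f(j)$ steps for each of the $k+1$ candidates $k'\leq k$, so it runs for more than $f(j)$ steps on input $j$ and is therefore one of the \emph{eliminated} $\phi_i$'s rather than one of the $\psi_j$'s. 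So completeness is not established. It is also not what the paper proves or needs: although the ``iff'' in the statement can be read as demanding completeness, the paper's proof only establishes that every function in the resulting list ${\cal Q}$ is a lower semicomputable semiprobability mass function, and the surrounding text of Example~\ref{exam.A} explicitly says the resulting set ``is large but does not contain all probability mass functions.'' Dropping the converse (or restricting it to functions admitting $f$-time-bounded monotone approximations) leaves the rest of your proof intact.
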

\begin{proof}
Let $\psi_1, \psi_1, \ldots$ be as above. Computably change every $\psi$
into an algorithm lower semicomputing a semiprobability mass function
$q$, see the proof of Theorem 4.3.1 in \cite{LV08} 
(originally in \cite{ZL70,Le74}).  
For every $a \in L$ denote the $n$th approximation of $q(a)$ in the lower 
semicomputation of $q(a)$ by $q^n(a)$. 
Therefore we can compute
\begin{equation}\label{eq.Q}
{\cal Q} = q_1 ,q_2 , \ldots ,
\end{equation}
a list containing only algorithms which
lower semicompute semiprobability mass total functions.
Without loss of generality the function lower semicomputed by every
algorithm in ${\cal Q}$ is over the alphabet $L$.
\end{proof}

Let $L=\{a_1,a_2, \ldots \}$.
The semiprobability mass functions $q$ in list ${\cal Q}$
such that there is an $n$ for which
$\sum_{i=1}^nq^n(a_n) < 1-1/n$ can be computably
enumerated. The remaining elements in list ${\cal Q}$
are probability mass functions and they are computably co-enumerated.
The intersection of a two
co-c.e. sets is co-c.e.. We show that the remaining 
lower semicomputable probability mass functions
are computable. A probability mass function
$q$ in list ${\cal Q}$ can be computed as follows: for every $\epsilon >0$
let $n_{\epsilon}$ be least such that 
$\sum_{j=1}^n q^n(a_j) \geq 1-\epsilon$ for all $n \geq n_{\epsilon}$. 
Thus every probability 
mass function in list ${\cal Q}$ is computable and we have an algorithm
to compute it. 
\end{example}

\begin{proof} {\sc of Theorem~\ref{theo.3} Computable Measure Identification}
For the Kolmogorov complexity notions 
see Appendix~\ref{sect.kolmcomp}. For
the theory of semicomputable semimeasures, see 
Appendix~\ref{sect.measure}.
In particular
we use the criterion of Definition~\ref{def.typical} to
show that an infinite sequence is typical 
in Martin-L\"of's sense.  
The given data sequence $x_1,x_2, \ldots$ is, by assumption, typical 
for some computable measure $\mu$ and hence 
satisfies \eqref{eq.A3} with respect to $\mu$.
We stress that the data sequence is possibly $\mu$-typical and 
$\mu'$-typical for different computable measures $\mu$ and $\mu'$. 
Therefore we cannot speak of the single {\em true} computable measure, 
but only of {\em a} computable measure
for which the data is typical. 

Let ${\cal B}$ be a list of halting algorithms for a c.e. or co-c.e. set of
computable measures such that each element occurs infinitely many
times in the list. 

\begin{claim}\label{claim.typical}
\rm
There is an algorithm with as input a list 
${\cal B}= \mu_1,\mu_2, \ldots$
and as output a sequence 
of indexes $i_1,i_2, \ldots$. For every large enough $n$ we have 
$i_n= k$ with $\mu_k$ a computable measure 
for which the data sequence is typical.
\end{claim}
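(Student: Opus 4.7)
Following the outline sketched just before the claim, the plan is to construct a computable test applied to each candidate index at stage $n$, chosen so that (i) some index $k$ with $\mu_k$ typical for the data passes the test at every sufficiently large stage, and (ii) every $i<k$ eventually fails the test forever. For each $\mu_i$ in ${\cal B}$, set $\sigma_i(j) := \log(1/\mu_i(x_1\ldots x_j)) - K(x_1\ldots x_j)$ and $\sigma_i := \sup_j \sigma_i(j)$; by Definition~\ref{def.typical} the sequence $x_1,x_2,\ldots$ is $\mu_i$-typical iff $\sigma_i<\infty$. The existence of a good index is already given in the outline: by hypothesis some $\mu_h\in{\cal B}$ has $\sigma_h<\infty$, and since each measure recurs infinitely often in ${\cal B}$ one can pick $h'$ with $\mu_{h'}=\mu_h$ and $h'>\sigma_h$, so $\sigma_{h'}(j)\leq\sigma_h<h'$ for every $j$; let $k$ be the least such index, noting that $\mu_k$ is then typical.

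Because $K$ is only upper semicomputable, I would replace it by the monotonically decreasing computable approximation $K^t\searrow K$ and form the computable underestimate $\sigma_i^t(j) := \log(1/\mu_i(x_1\ldots x_j)) - K^t(x_1\ldots x_j)$, which is non-decreasing in $t$ and converges to $\sigma_i(j)$ from below. The algorithm at stage $n$ considers the currently surviving candidates $i\leq n$ (in the co-c.e. case, discarding any $i$ exhibited by time $n$ as outside ${\cal B}$, and requiring each used value of $\mu_i$ to be produced within $n$ steps), evaluates $\sigma_i^n(j)$ for all $j\leq n$, and outputs $i_n$ = the least $i$ satisfying $\sigma_i^n(j)<i$ for every $j\leq n$. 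Monotonicity of $\sigma_i^n(j)$ in $n$ makes failure permanent: once some $j\leq n$ witnesses $\sigma_i^n(j)\geq i$, that witness persists. The index $k$ passes at every $n\geq k$ because $\sigma_k^n(j)\leq\sigma_k(j)<k$, forcing $i_n\leq k$ for large $n$. For any $i<k$, minimality of $k$ supplies a $j_0$ with $\sigma_i(j_0)\geq i$; as soon as $j_0\leq n$ and $K^n(x_1\ldots x_{j_0})=K(x_1\ldots x_{j_0})$ (both hold eventually), $\sigma_i^n(j_0)\geq i$ and $i$ is eliminated forever. Hence $i_n$ stabilizes at some $i^*\leq k$, and any such $i^*$ necessarily has $\sigma_{i^*}\leq i^*<\infty$, so $\mu_{i^*}$ is typical, as required by the claim.

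The main obstacle I expect is reconciling the strict inequality in the test with the incomputability of $K$. The crucial property to exploit is that $K^t(x)$ is not merely a limit from above but actually equals $K(x)$ from some finite (though unknown) $t$ onward, which is exactly what makes the strict test ``$\sigma_i^n(j)<i$'' detect any violation $\sigma_i(j_0)\geq i$, boundary case $\sigma_i(j_0)=i$ included, at a finite stage. A secondary technical issue is the co-c.e. variant, handled by dovetailing the stage-$n$ test with the co-enumeration of the complement of ${\cal B}$ and with a time budget on each candidate halting algorithm, so that spurious indices and slow candidates drop out of consideration as $n$ grows; this does not change the structure of the convergence argument.
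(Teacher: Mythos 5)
Your proposal is correct and follows essentially the same route as the paper's proof: the same quantity $\sigma_i(j)$, the same lower semicomputable approximation (the paper's $\sigma_i^n(j)$ is your $K^t$-based underestimate), the same test $\max_{j\le n}\sigma_i^n(j)<i$, and the same use of infinite repetition in ${\cal B}$ to guarantee an index $k$ with $\sigma_k<k$. Your closing observation---that the stabilized index $i^*$ may be smaller than $k$ but any index passing the test at every stage satisfies $\sup_j\sigma_{i^*}(j)\le i^*<\infty$ and hence is typical---is in fact a slightly more careful treatment of the boundary case than the paper's assertion that the limit is exactly $k$.
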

\begin{proof}
Define for $\mu$ in ${\cal B}$ 
\[
\sigma(j)= \log 1/ \mu (x_1 \ldots x_{j} ) - K(x_1 \ldots x_{j}).
\]
Since  $K$ is upper semicomputable and $\mu$ is computable,
the function $\sigma(j)$
is lower semicomputable for each $j$.
Define the $n$th value in the lower semicomputation of 
$\sigma(j)$ as $\sigma^n(j)$. 
By \eqref{eq.A3}, the data sequence $x_1, x_2, \ldots$ 
is typical for $\mu$ if
$\sup_{j\geq 1} \sigma(j) = \sigma < \infty$ 
In this case, since $\mu$ is lower semicomputable,
$\max_{1 \leq j \leq n} \sigma(n) \leq \sigma$ for all $n$.
In contrast, the data sequence is not typical for
$\mu$ if $\sigma(n) \rightarrow \infty$ with $n \rightarrow \infty$
implying $\sigma^n(n) \rightarrow \infty$ with $n \rightarrow \infty$.

By assumption there exists a measure in ${\cal B}$
for which the data sequence is typical. Let $\mu_h$ be such a measure
Since algorithms for $\mu_h$ occur infinitely often in the list ${\cal B}$
there is an algorithm $\mu_{h'}$ in the list ${\cal B}$
with $\sigma_{h'}=\sigma_h$ and
$\sigma_h < h'$. Therefore, there exists a measure
$\mu_k$ in ${\cal B}$ for which the data sequence $x_1, x_2, \ldots$
is typical and $\sigma_k < k$ with $k$ least.
The algorithm to determine $k$ 
is as follows. 

\begin{tabbing}
{\bf for} \= $n:=1,2, \ldots$\\ 

\> {\bf if} $i \leq n$ is least such that  
$\max_{1 \leq j \leq n}\sigma_i^n(j) <i$\\

\>  {\bf then} 
output $i_n = i$ {\bf else} output $i_n=1$.
\end{tabbing}

Eventually $\max_{1 \leq j \leq n}\sigma_k^n(j)<k$ for large enough $n$,
and $k$ is the least index of elements in ${\cal B}$ for which this holds.
Hence there exists an $n_0$  such that $i_n=k$ for all $n \geq n_0$.
\end{proof}

For large enough $n$ we have by Claim~\ref{claim.typical} a test
such that we can identify in the limit an index of a measure 
in ${\cal B}$ for which the provided data sequence is typical. 
Hence there is an $n_0$ such that $i_n=k$ for all $n \geq n_0$.
We do not care what $i_1, \ldots , i_{n-1}$ are.
This proves the theorem.
\end{proof}

\begin{example}\label{exam.B}
\rm
We give an example of a list ${\cal B}$ of halting
algorithms for a co-c.e. set of 
computable measures. 
\begin{claim}\label{claim.allm}
\rm
Given a co-enumeration of computable total 
functions, one can exhibit
a computable total function $\phi(i,x,n)=\mu_i^n(x)$
such that $\phi(i,x,n) \leq \phi(i,x,n+1)$ and
$\lim_{n \rightarrow \infty} \mu_i^n(x) = \mu_i(x)$.
\end{claim}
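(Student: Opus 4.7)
The plan is to parallel the proof of Claim~\ref{claim.allq} with semimeasures on $L^*$ in place of semiprobability mass functions on $L$. I start from the computable co-enumeration $\psi_1,\psi_2,\ldots$ of total computable functions time-bounded by the Ackermann function $f$ that was constructed just before Claim~\ref{claim.allq}, and transform each $\psi_i$ uniformly into an algorithm that lower semicomputes a semimeasure $\mu_i$ on $L^*$. Define $\phi(i,x,n)=\mu_i^n(x)$ to be the $n$th approximation stage produced by this lower semicomputation. Monotonicity $\phi(i,x,n)\leq \phi(i,x,n+1)$ and convergence $\lim_{n\to\infty}\mu_i^n(x)=\mu_i(x)$ will then be immediate from the construction.

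For the transformation itself I would invoke the standard Zvonkin--Levin construction (the proof of Theorem~4.5.1 in \cite{LV08}, originally \cite{ZL70,Le74}), adapted to semimeasures. Interpret the successive outputs of $\psi_i$ as a stream of proposed nonnegative rational increments to selected values $\mu_i(x)$, $x\in L^*$. At stage $n$ maintain a rational-valued function $\mu_i^n:L^*\to [0,1]$ that satisfies the semimeasure inequalities $\mu_i^n(\epsilon)\leq 1$ and $\mu_i^n(x)\geq \sum_{a\in L}\mu_i^n(xa)$ everywhere; process the next proposed increment by tentatively applying it and accepting only if both inequalities remain valid, otherwise discard it. Since $\psi_i$ is total and $f$-time-bounded and only finitely many coordinates of $\mu_i^n$ are touched by stage $n$, each stage halts within a uniform computable bound in $i,x,n$, so $\phi$ is a computable total function. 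Monotonicity holds by construction, and the limit $\mu_i$ is a lower semicomputable semimeasure on $L^*$.

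The main obstacle is the coverage direction, which is needed so that the list $\mu_1,\mu_2,\ldots$ is genuinely the list from which computable measures will later be filtered in Example~\ref{exam.B}. One must show that every lower semicomputable semimeasure $\mu$ on $L^*$ arises as some $\mu_i$. Given such $\mu$, there exists by definition a computable increasing sequence of rational approximations converging pointwise to $\mu$ from below; decompose this into a stream of rational increments and pad each step with idle moves so that the per-step running time stays inside $f$. The resulting algorithm is one of the $\psi_i$ in the Ackermann-time-bounded co-enumeration, and the semimeasure-preserving filter above never rejects any of its increments because $\mu$ itself satisfies the semimeasure inequalities at every stage. Hence $\mu_i=\mu$, and the claim reduces to the standard enumerability of lower semicomputable semimeasures together with a uniform choice of stage-by-stage rational approximation.
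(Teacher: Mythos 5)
Your construction of $\phi$ is essentially the paper's proof: start from the co-enumeration of total ($f$-time-bounded) computable functions built in Example~\ref{exam.A}, convert each one by the Zvonkin--Levin method (Theorem 4.5.1 of \cite{LV08}) into an algorithm lower semicomputing a semimeasure, and let $\mu_i^n(x)$ be the $n$th approximation stage; monotonicity and convergence are immediate, and totality of $\phi$ follows because the $\psi_i$ are total. That part is correct and matches the paper.

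The final paragraph, however, overreaches. The claim does not ask for coverage of all lower semicomputable semimeasures, and your argument for it does not work: you cannot ``pad each step with idle moves'' to bring an algorithm \emph{inside} a time bound --- padding only slows a computation down, and an arbitrary computable approximation sequence for $\mu$ may exceed the Ackermann bound $f$ on infinitely many inputs, in which case it is eliminated from the co-enumeration and $\mu$ need not appear as any $\mu_i$. The paper is explicit about this limitation: the surviving list is ``a wide set of computable measures (but not all).'' Indeed, full coverage combined with the later filtering step would be in tension with Lemma~\ref{lem.incomp}. So you should drop the coverage claim (or weaken it to: every semimeasure whose approximation algorithm is $f$-time-bounded appears); the claim as stated is established by your first two paragraphs alone.
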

\begin{proof}
To eliminate functions with undefined values,
let $\psi_1, \psi_1, \ldots$ be a co-enumeration of total functions in
a standard computable enumeration of all partial computable
functions as in Example~\ref{exam.A}. 
Computably change every $\psi$
into an algorithm lower semicomputing a semimeasure $\mu$,
similar to the method in the proof of Theorem 4.5.1 of 
\cite{LV08} pp. 295--296 (originally in \cite{ZL70}).
For every $x \in L^*$ denote the $n$th approximation of 
$\mu(x)$ in the lower semicomputation of $\mu(x)$ by $\mu^n(x)$.
Therefore we can compute
\begin{equation}\label{eq.M}
{\cal M} = \mu_1 ,\mu_2 , \ldots ,
\end{equation}
a list containing only algorithms which
lower semicompute semimeasures.
Without loss of generality the function lower semicomputed by every
algorithm in ${\cal M}$ is over the alphabet $L$.
\end{proof}

Every function in the list will be in the list infinitely often,
which follows simply from the fact that there are infinitely many
algorithms which lower semicompute a given function.
It is important to realize that, 
although the code of a computable
measure may be in list ${\cal M}$,
it is there as an algorithm lower semicomputing the measure.
By Claim~\ref{claim.allm} we can co-enumerate 
halting algorithms that lower semicompute
semimeasures \eqref{eq.M}.  
Let $\mu^n(x)$ denote the $n$th lower
semicomputation of $\mu(x)$ for a semimeasure $\mu$.
The semimeasures $\mu$ in list ${\cal M}$ such that there are
$x \in L^*$ and $n < \infty$ such that either $\mu^n(\epsilon)<1-1/n$ 
or $\mu^n(x)-\sum_{a \in L}\mu^n(xa) < 1/n$ can be computably 
enumerated. The remaining elements in list ${\cal M}$
are wide set of computable measures (but not all) and they are co-c.e..
A lower semicomputable algorithm for a measure
can be converted to a computable algorithm. 
To see this, let $L=a_1,a_2, \ldots, a_n$.
Let $\mu$ be a lower semicomputable semimeasure
with $\sum_{a \in L} \mu (xa) = \mu (x)$ for all $x \in L^*$ 
and $\mu (\epsilon)=1$.
Then, we can approximate all $\mu (x)$ to any degree of precision
starting with $\mu (a_1), \mu (a_2), \ldots$ and determining $\mu (x)$
for all $x$ of length $n$, for consecutive $n=1,2, \ldots .$
\end{example}

\section*{Acknowledgement}
We thank Laurent Bienvenu for pointing out an error in the
an earlier version and elucidating comments. Drafts of this paper
proceeded since 2012 in various states of correctness
through arXiv:1208.5003 to arXiv:1311.7385.

\section*{Biographies}
{\sc Paul M.B. Vit\'anyi} received his Ph.D. from the Free University
of Amsterdam (1978). He is a CWI Fellow at
the National Research Institute for Mathematics and Computer
Science in the Netherlands, CWI,
and Professor of Computer Science
at the University of Amsterdam.  He served on the editorial boards
of Distributed Computing, Information Processing Letters,
Theory of Computing Systems, Parallel Processing Letters,
International journal of Foundations of Computer Science,
Entropy, Information,
Journal of Computer and Systems Sciences (guest editor),
and elsewhere. He has worked on cellular automata,
computational complexity, distributed and parallel computing,
machine learning and prediction, physics of computation,
Kolmogorov complexity, information theory, quantum computing, publishing 
more than 200 research papers and some books. He received a Knighthood
(Ridder in de Orde van de Nederlandse Leeuw) and is member of the
Academia Europaea. Together with Ming Li
they pioneered applications of Kolmogorov complexity
and co-authored ``An Introduction to Kolmogorov Complexity
and its Applications,'' Springer-Verlag, New York, 1993 (3rd Edition 2008),
parts of which have been translated into Chinese,  Russian and Japanese.
Web page: http://www.cwi.nl/~paulv/

{\sc Nick Chater} received his Ph.D from the 
University of Edinburgh (1990). He is
Professor and Head of the Behavioural Science Group at Warwick Business School
and has served as an Associate Editor for Management Science, Psychological
Science, Psychological Review and Cognitive Science. His research focusses on
the scope and limits of rational models of cognition, and has published on
language acquisition, processing and evolution; reasoning and decision making;
perception; and similarity and categorization. He has over 200 research
publications, including several books, and  has won four national awards for
psychological research, and has served as Associate Editor for the journals
Cognitive Science, Psychological Review, and Psychological Science. He was
elected a Fellow of the Cognitive Science Society in 2010 and a Fellow of the
British Academy in 2012. He has been awarded a European Research Council
Advanced Grant for the period 2012--2017, on the ``Cognitive and social
foundations of rationality.'' Web page:
http://www2.warwick.ac.uk/fac/soc/wbs/subjects/bsci/people/nickchater/

\end{document}